\documentclass[11pt]{article}

\usepackage[margin=1.0in]{geometry}

\usepackage[authoryear,round]{natbib}

\usepackage[utf8]{inputenc} 
\usepackage[T1]{fontenc}    
\usepackage{url}            
\usepackage{booktabs}       
\usepackage{amsfonts}       
\usepackage{nicefrac}       
\usepackage[protrusion=true,expansion=false]{microtype}

\usepackage{algorithm}
\usepackage{algpseudocode}
\algtext*{EndFor}
\algtext*{EndIf}
\algrenewcommand\algorithmicrequire{\textbf{Input:}}
\algrenewcommand\algorithmicensure{\textbf{Output:}}
\algrenewcommand\algorithmicdo{} 
\algrenewcommand\alglinenumber[1]{}   
\usepackage{amsmath}    
\usepackage{amsfonts}
\usepackage{amssymb}    
\usepackage{amsthm}     
\usepackage{bm}         
\usepackage{enumitem}
\usepackage{wrapfig}
\usepackage{graphicx}

\usepackage{afterpage}
\usepackage{authblk}  

\newcommand{\prox}{\operatorname{prox}}

\usepackage[dvipsnames]{xcolor}
\usepackage[
    colorlinks=true,
    linkcolor=RoyalPurple,   
    citecolor=Blue,     
    urlcolor=Violet     
]{hyperref}
\usepackage[most]{tcolorbox}
\usepackage{varwidth}
\usepackage[capitalize]{cleveref}
\newtcbox{\mybox}{colback=blue!5,
	colframe=blue!30!black, center, enhanced, varwidth upper}
\usepackage{stmaryrd}
\definecolor{myblue}{RGB}{0, 50, 200} 
\definecolor{myorange}{RGB}{150, 50, 50} 

\usepackage{xspace}

\crefformat{equation}{(#2#1#3)}
\makeatletter
\newcommand{\oset}[3][0ex]{%
  \mathrel{\mathop{#3}\limits^{
    \vbox to#1{\kern-2\ex@
    \hbox{$\scriptstyle#2$}\vss}}}}
\makeatother

\newcommand{\x}{\mathbf{x}}
\newcommand{\y}{\mathbf{y}}
\newcommand{\z}{\mathbf{z}}
\renewcommand{\b}{\mathbf{b}}   
\newcommand{\ones}{\mathbf{1}}
\newcommand{\T}{\mathcal{T}}

\newtheorem{lemma}{Lemma}

\theoremstyle{remark}

\crefname{assumption}{Assumption}{Assumptions}

\title{Learning Affine-Equivariant Proximal Operators}

\author[1]{Oriel Savir}
\author[1]{Zhenghan Fang}
\author[1]{Jeremias Sulam}
\affil[1]{Mathematical Institute for Data Science, Johns Hopkins University, Baltimore, MD 21218, USA}
\date{}

\begin{document}

\maketitle

\begin{abstract}
  Proximal operators are fundamental across many applications in signal processing and machine learning, including solving ill-posed inverse problems. Recent work has introduced Learned Proximal Networks (LPNs), providing parametric functions that compute exact proximals for data-driven and potentially non-convex regularizers. However, in many settings it is important to include additional structure to these regularizers--and their corresponding proximals--such as shift and scale equivariance. In this work, we show how to obtain learned functions parametrized by neural networks that provably compute exact proximal operators while being equivariant to shifts and scaling, which we dub Affine-Equivariant Learned Proximal Networks (AE-LPNs). We demonstrate our results on synthetic, constructive examples, and then on real data via denoising in out-of-distribution settings. Our equivariant learned proximals enhance robustness to noise distributions and affine shifts far beyond training distributions, improving the practical utility of learned proximal operators.

\end{abstract}

\section{Introduction}
\label{sec:intro}

Inverse problems are concerned with the task of inferring parameters from a given observation, commonly a recovery task after a degradation process, such as in image denoising and deblurring \citep{milanfar2024denoisingpowerfulbuildingblockimaging, evangelista2023ambiguitysolvingimaginginverse}, super-resolution \citep{candes2012mathematicaltheorysuperresolution}, and compressed sensing \citep{8015117}. These problems are often ill-posed, requiring regularization to provide a unique or stable solution \citep{benning2018modernregularizationmethodsinverse}. Classical approaches construct explicit regularizers to promote characteristics in the solution, such as smoothness \citep{doi:10.1137/1021044} and sparsity \citep{bruckstein2009sparse}. Purely learning-based methods, on the other hand, train models to minimize reconstruction losses between a ``corrupted'' input and ground-truth, such as in CNN-based image denoising \citep{NIPS2008_c16a5320}.

A third approach exploits the observation that many iterative solvers for inverse problems use the proximal operator as a sub-step \citep{10.1561/2400000003}. For a \emph{proximable}\footnote{Or weakly-convex function.} regularizer $R:\mathbb R^n \to \mathbb R$, its proximal operator at a point $\y\in\mathbb R^n$ is defined as
\begin{equation} \label{eq:prox}
    \prox_R(\y) = \arg\min_{\x} \frac12 \|\x-\y\|^2_2 + R(\x).
\end{equation}
Since the proximal operator can be loosely thought of as a denoising operation, frameworks utilizing denoising networks in place of the proximal have become very popular \citep{6737048,Meinhardt_2017,romano2017littleenginecouldregularization}.
These methods generally provide no guarantees of computing true proximal steps or recovering the correct regularizer for the data. Recent work \citep{fang2024whatspriorlearnedproximal} provides such guarantees via their \emph{Learned Proximal Networks (LPNs)}. These LPNs provide a means to train a denoiser that is a proximal operator by construction, resulting in a useful compromise between adaptability to data and analytical tractability. Furthermore, \citet{fang2024whatspriorlearnedproximal} provide a loss function, \emph{proximal matching loss} ($\mathcal{L}_{PM}$), that ensures the learned regularizer correctly matches the prior of training data (in an asymptotic way).

While useful, the regularizers that result from LPNs are general, and it is unclear how to enforce additional structural properties that can be useful for a given task, such as symmetry properties. When data exhibits symmetries, enforcing them in a regularizer can improve solutions to ill-posed problems. Moreover, learning functions that respect the symmetries present in data can reduce the statistical complexity of these learning tasks \citep{bietti2019group,anselmi2016unsupervised}. Equivariance, in particular, has been shown previously to greatly aid in generalization of denoising models \citep{mohan2020robustinterpretableblindimage, herbreteau2024normalizationequivariantneuralnetworksapplication}. A function is equivariant if transforming the input transforms the output accordingly. Formally, a function $f \colon \mathbb R^n \to \mathbb R^n $ is equivariant w.r.t to a group $G$ if, for every element of a group $g\in G$, $f(g(\x))= g(f(\x)), \forall \x \in \mathbb R^n$.
We focus on functions that are shift- and scale-equivariant, or \emph{scalar-affine equivariant}, i.e. when $f(a \x + \b) = a f(\x) + \b, \quad
\forall \x \in \mathbb{R}^n, \, a \in \mathbb{R}_+, \, \b \in \mathbb{R}^n$, where $\b$ is a constant vector, i.e. $\b = b \ones$. Such functions have been shown to be particularly useful in image denoising problems where noise-levels can change from training to deployment settings \citep{herbreteau2024normalizationequivariantneuralnetworksapplication}. Thus, the central question we address in our work is this: \emph{How can we design data-driven learned and scalar-affine equivariant proximal operators?}

Multiple approaches exist to realize affine-equivariant functions parametrized by neural networks. A straightforward option is the so-called \emph{transformation trick}: apply a suitable transformation $\T$ to the input, apply the network to the transformed data, and then invert the transformation via $\T^{-1}$. In image denoising, where the input and output share the same dimensions, a sensible choice of $\T$ is the mean-standard deviation normalization, $\T(\x) = \frac{\x - \mathrm{mean}(\x)}{\mathrm{stdev}(\x)}.$
Another approach is to design denoising networks that are intrinsically affine-equivariant. This has been recently done in \citet{herbreteau2024normalizationequivariantneuralnetworksapplication} by removing additive biases and introducing \emph{affine convolutions} combined with a sorting-based nonlinearity, \emph{SortPool}. Unfortunately, neither approach allows for a satisfying answer to our question. First, the normalization trick breaks the guarantee that the network represents a proximal operator\footnote{The normalization steps prevent the learned mapping from being the gradient of a convex function. See more in Section~\ref{sec:background}.}. Meanwhile, the architectural adaptations in \citet{herbreteau2024normalizationequivariantneuralnetworksapplication} are designed for general denoisers, whereas LPNs are defined implicitly as the gradient of a scalar-valued (and convex) network. Crucially, affine-equivariance does not transfer from a function to its gradient -- a function may be affine-equivariant while its gradient is not.

This motivates our proposed AE-LPN, providing LPNs (and thus, exact proximals) while preserving their equivariance structure to affine transformations. In particular, we identify conditions under which a convex function admits an affine-equivariant gradient mapping, and show how these conditions can be enforced in neural networks through input transformations and architectural design. By enforcing these conditions in an input convex neural network and taking its gradient, we obtain AE-LPNs that provably parameterize proximal operators while being affine-equivariant. We demonstrate the effectiveness of AE-LPNs in both low-dimensional and high-dimensional settings for learning characterizable, robust and generalizable priors.

\section{Background}
\label{sec:background}

Learned Proximal Networks (LPNs) \citep{fang2024whatspriorlearnedproximal}
parameterize proximal operators -- i.e., functions of the form of \eqref{eq:prox} -- by using their characterization as the gradient of a convex potential $\psi_\theta : \mathbb{R}^n \to \mathbb{R}$. Previous works have shown an equivalence between functions that are gradients of convex potentials and proximal operators \citep{gribonval2020characterizationproximityoperators}, showing that $f = \prox_R$ for some proximable $R$ if and only if there exist a convex potential $\psi$ so that\footnote{This result can be extended to non-smooth convex potentials $\psi$, see \citet{gribonval2020characterizationproximityoperators}.} $f = \nabla \psi$. LPNs \citep{fang2024whatspriorlearnedproximal} provide proximal operators directly as gradients of convex functions, $\psi_\theta$, parametrized by convex neural networks with parameters $\theta$. Convexity in $\psi_\theta$ is enforced by constraining network weights to be nonnegative and using convex, nondecreasing activation functions, following the design of Input Convex Neural Networks (ICNN) \citep{amos2017inputconvexneuralnetworks}. The resulting mapping $f_\theta(\x) = \nabla \psi_\theta(\x)$ is a conservative gradient field derived from a convex function, and therefore a proximal operator, i.e. $f_\theta(\x) = \prox_R(\x)$. Moreover, the implicit function $R$ can be evaluated at arbitrary points $\x\in\mathbb R^n$ by solving a (strongly) convex optimization problem \citep{fang2024whatspriorlearnedproximal}.

While LPNs provide proximal operators by construction, it is important to have these networks compute proximals for ``useful'' functions, or regularizers. In the context of inverse problems, such a choice corresponds to the log-density of the data distribution. This can be seen by understanding the definition of the proximal operators (see \eqref{eq:prox}) as a \textit{maximum-a-posteriori} (MAP) estimate for $\x$ given Gaussian-corrupted samples $\y = \x + \z$, where $\z \sim \mathcal N(0,\sigma^2I)$. The work in \citet{fang2024whatspriorlearnedproximal} provides a training scheme, dubbed \textit{proximal matching}, that precisely promotes the implicit regularizer to recover the log-density of the data in an unsupervised manner. Proximal matching relies on being able to sample ground-truth signals $\x\sim p_\x$, constructing Gaussian corrupted samples $\mathbf{y}$ as above, and minimizing a risk function that measures the ability of $f_\theta$ to denoise $\y$ and recover $\x$, i.e. $\min_\theta \mathbb{E}_{\mathbf{x},\mathbf{y}}\Big[d(f_\theta(\mathbf{y}),\mathbf{x})\Big]$,
where $d$ is an appropriate loss function. Most losses for denoising involve norms, such as the $\ell_2$ distance and $\ell_1$ norms, but these do not lead to estimates that provide MAP estimates. Instead, LPNs are trained with a \emph{proximal matching loss} with hyper-parameter $\gamma>0$, $d_\gamma(f_\theta(\mathbf{y}),\mathbf{x})$ \citep{fang2024whatspriorlearnedproximal}, given by
\begin{equation}
d_\gamma(f_\theta(\y),\x)
  = 1 - \frac{1}{(\pi \gamma^2)^{n/2}}
      \exp\!\left( -\frac{\|f_\theta(\y)-\x\|_2^2}{\gamma^2} \right).
\end{equation}
By minimizing the proximal matching loss, and in the limit as $\gamma\to 0$, the minimizer of this optimization problem recovers the proximal $\prox_{-\sigma^2\log{p_\x}}$ (see \citet[Theorem 3.2]{fang2024whatspriorlearnedproximal}).

\section{Learning Affine-Equivariant LPNs}
\label{ssec:AE-LPNs}

The methods presented above allow for training proximal operators particularly useful in inverse problems. However, it is unclear how to enforce additional structure in the learned priors, and equivariance in particular. In this section, we show how to construct learned proximal networks that respect affine equivariance, focusing on the specific case of affine equivariance, i.e. when $f(a\x + b \mathbf{1}) = af(\x) + b \mathbf{1}.$ Note that, in the context of inverse problems, it would not be sensible to seek for functions that respect complete affine-equivariance, i.e. $f(\mathbf{A} \x + \b) = \mathbf{A} f(\x) + \b$ for any $\mathbf{A} \in \mathbb{R}^{n \times n}$ and $\b \in \mathbb{R}^n$, as this leads to trivial operators\footnote{Indeed, when $\mathbf{A}=I$ and $\b = -\x$ equivariance is only satisfied for an identity up to a constant. Additionally, if $\mathbf{A}$ were a random permutation matrix, the denoiser would need to be equivariant to any permutation of pixels, which is not practically useful.}. We will study shift and scale equivariance separately, and then combine them into our AE-LPNs.

\paragraph{Scale-equivariant proximals.}
Designing scale-equivariant functions parametrized by neural networks--that are given by the composition of affine maps and entry-wise non-linearities--can be easily done by removing all bias terms and employing scale-equivariant (or 1-homogeneous) activation functions, such as Rectifying Linear Units (ReLU) \citep{mohan2020robustinterpretableblindimage} or SortPool \citep{herbreteau2024normalizationequivariantneuralnetworksapplication}. However, the gradient of a scale-equivariant function may not be scale-equivariant\footnote{Consider, for example, the identity function $f(x) = x$.}. Since our focus is on the design of proximal functions via LPNs, we require the gradient of a convex neural potential to be scale-equivariant, necessitating a different construction. We show in the following lemma that, if a function is homogeneous of degree $2$, then its gradient is scale-equivariant.

\begin{lemma}
Let the function $\Psi : \mathbb{R}^n \to \mathbb{R}$ be homogeneous of degree two, i.e., $\Psi(a\x) = a^2 \Psi(\x), \forall a>0$. Then, its gradient is homogeneous of degree 1 (i.e., scale-equivariant): $\nabla \Psi(a\x) = a \nabla \Psi(\x)$.
\end{lemma}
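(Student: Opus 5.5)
The plan is to differentiate both sides of the homogeneity identity $\Psi(a\x) = a^2\Psi(\x)$ with respect to $\x$, holding $a>0$ fixed, and use the chain rule. This assumes $\Psi$ is differentiable, which is implicit in the statement since it refers to $\nabla\Psi$.

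\textbf{Key steps.}
Fix $a>0$ and define $g_a(\x) := \Psi(a\x)$.

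\emph{Step 1.} Compute $\nabla g_a$ by the chain rule through the linear map $\x\mapsto a\x$, whose Jacobian is $aI$:
\[
\nabla g_a(\x) = a\,\nabla\Psi(a\x).
\]

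\emph{Step 2.} By hypothesis, $g_a(\x) = a^2\Psi(\x)$ for every $\x$. Since the two functions agree everywhere, their gradients agree, so
\[
\nabla g_a(\x) = a^2\,\nabla\Psi(\x).
\]

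\emph{Step 3.} Equate the two expressions and divide by $a>0$. This gives $\nabla\Psi(a\x) = a\,\nabla\Psi(\x)$ for all $\x\in\mathbb R^n$, which is the claimed scale-equivariance.

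\textbf{Main obstacle.}
The only delicate point is regularity. The argument needs $\Psi$ to be differentiable at both $\x$ and $a\x$. This holds automatically, because $\Psi(\cdot) = a^{-2}\Psi(a\,\cdot)$ transfers differentiability at $a\x$ to differentiability at $\x$, and vice versa.

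If one prefers not to invoke the chain rule, the same identity follows directly from difference quotients. Write
\[
\frac{\Psi(a\x + t\mathbf v) - \Psi(a\x)}{t} = a^2\,\frac{\Psi(\x + (t/a)\mathbf v) - \Psi(\x)}{t},
\]
and let $t\to 0$. The left side tends to $\langle\nabla\Psi(a\x),\mathbf v\rangle$ and the right side tends to $a\,\langle\nabla\Psi(\x),\mathbf v\rangle$, for every direction $\mathbf v$.

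If nonsmooth convex potentials are allowed, as in the footnote on \citet{gribonval2020characterizationproximityoperators}, the same scaling argument applies to subdifferentials. It gives $\partial\Psi(a\x) = a\,\partial\Psi(\x)$ directly from the subgradient inequality.
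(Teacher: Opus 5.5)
Your proof is correct and follows the paper's argument: differentiate both sides of $\Psi(a\x)=a^2\Psi(\x)$ in $\x$, use the chain rule to obtain $a\nabla\Psi(a\x)=a^2\nabla\Psi(\x)$, and divide by $a>0$. Your additional remarks, that differentiability at $\x$ and at $a\x$ are equivalent and that $\partial\Psi(a\x)=a\,\partial\Psi(\x)$ holds in the nonsmooth convex case, are correct refinements that the paper leaves implicit.
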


\begin{proof}
It suffices to compute the gradient of $\nabla_\x\Psi(a\x) = a\nabla \Psi(a\x)$, and to compare it to the gradient of $\nabla_\x [a^2\Psi(\x)] = a^2 \nabla \Psi(\x)$ to realize that, since $\Psi(a\x) = a^2 \Psi(\x)$ by assumption, then $\nabla \Psi(a\x) = a \nabla \Psi(\x)$.
\end{proof}

Importantly, this lemma is constructive: to obtain a scale-equivariant LPN, it suffices to make the ICNN homogeneous of degree 2. To this end, we first construct a scale-equivariant ICNN, $\Psi_\theta$, by removing all the bias terms and use scale-equivariant SortPool activations, as in \citet{herbreteau2024normalizationequivariantneuralnetworksapplication}. Then, we square the output of $\Psi_\theta$ to obtain $h_\theta(\x) = \Psi_\theta^2 (\x)$. It is evident that $h_\theta$
remains convex and is now homogeneous of degree two since
\[
    h_\theta(a\x) = \Psi_\theta^2(a\x) = (a \Psi_\theta(\x))^2 = a^2 h_\theta(\x).
\]
Thus, $\nabla h(\x)$ is a scale-equivariant proximal operator.

\paragraph{Affine-equivariant proximals.}
To achieve shift-equivariance, in addition to scale-equivariance, we construct a convex function that decomposes a signal from its mean by a projection onto the all-ones vector $\ones \in \mathbb{R}^n$ and its orthogonal complement $\ones^\perp$. More precisely, let $P = \tfrac{1}{n}\ones\ones^\top$ denote the projection onto $\operatorname{span}\{\mathbf{1}\}$,
so that $P\x$ results in a constant (mean) vector, while $(I-P)\x$ is its projection onto the orthogonal complement of $\operatorname{span}\{\mathbf{1}\}$. Then, given a convex, $2$-homogeneous function $h:\mathbb{R}^n \to \mathbb{R}$, we define
\begin{equation}\label{eq:AE-LPN}
    \tilde{h}(\x) = h((I-P)\x) + \tfrac{1}{2}\|P\x\|_2^2.
\end{equation}
This function is convex, as it is the sum of convex functions that are compositions of a convex function and an affine function.
Moreover, it is easy to see that $ \nabla \tilde h (\x) = (I-P)\nabla h((I-P)\x) + P\x$.
The two desired equivariances then follow. First, for an additive shift in the input by $c\mathbf{1}$, we have $\nabla \tilde h (\x+c\mathbf{1}) = \nabla \tilde h(\x)+c\mathbf{1}$.
Furthermore, the $2$-homogeneity of $h$ implies that $\tilde h$ is $2$-homogeneous also, ensuring that $\nabla \tilde h$ is scale-equivariant. In conclusion, $\nabla \tilde h(a\x+c\mathbf{1}) = a \nabla \tilde h(\x) + c\mathbf{1}.$
We have thus proved the following simple Lemma, which characterizes our AE-LPNs.

\begin{lemma}[AE-LPNs]
    Let $\tilde h(\x)$ be defined as in \eqref{eq:AE-LPN}. Then, there exist $R$ so that $\nabla \tilde h(\x) = \prox_R(\x)$ and, moreover, $\nabla \tilde h(\x)$ is affine-equivariant.
\end{lemma}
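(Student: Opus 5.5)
The statement splits into two independent claims, and I will prove them separately.

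\emph{First claim: existence of $R$.} I will invoke the characterization of \citet{gribonval2020characterizationproximityoperators} recalled in Section~\ref{sec:background}: a map is $\prox_R$ for some proximable $R$ exactly when it is the gradient of a convex potential. So I only need $\tilde h$ to be convex and differentiable. For convexity:
\begin{itemize}
\item $h$ is convex by construction, since it is the square of the bias-free ICNN $\Psi_\theta$. Here I will assume $\Psi_\theta\ge 0$, or else compose with a nonnegative convex nondecreasing map before squaring, so that the square stays convex.
\item $h\circ(I-P)$ is convex, being a convex function composed with a linear map.
\item $\tfrac12\|P\x\|_2^2$ is a convex quadratic.
\end{itemize}
The sum $\tilde h$ is therefore convex. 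I expect this nonnegativity caveat to be the only real subtlety. If $\tilde h$ is not smooth everywhere, I will fall back on the nonsmooth version of the characterization mentioned in the footnote and read $\nabla\tilde h$ as a selection of the subdifferential.

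\emph{Gradient formula.} Using that $P$ and $I-P$ are symmetric, the chain rule gives
\[
\nabla\tilde h(\x)=(I-P)\nabla h((I-P)\x)+P^\top P\x=(I-P)\nabla h((I-P)\x)+P\x ,
\]
where the last step uses $P^\top P=P^2=P$.

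\emph{Second claim: affine equivariance.} Take $a>0$ and $c\in\mathbb R$. The key facts are
\[
(I-P)\ones=0, \qquad P\ones=\ones .
\]
They give $(I-P)(a\x+c\ones)=a(I-P)\x$ and $P(a\x+c\ones)=aP\x+c\ones$. By the first Lemma applied to the $2$-homogeneous function $h$, its gradient satisfies $\nabla h(a\,\cdot)=a\nabla h(\cdot)$. Substituting everything into the gradient formula yields
\[
\nabla\tilde h(a\x+c\ones)=a(I-P)\nabla h((I-P)\x)+aP\x+c\ones=a\nabla\tilde h(\x)+c\ones,
\]
which is the asserted affine equivariance.
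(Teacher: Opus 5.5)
Your proof is correct and follows essentially the paper's argument: convexity of $\tilde h$ as a sum of convex functions composed with linear maps, the gradient formula $\nabla\tilde h(\x)=(I-P)\nabla h((I-P)\x)+P\x$, and equivariance via $(I-P)\ones=0$, $P\ones=\ones$ and the first Lemma; the only cosmetic difference is that you apply that Lemma to $h$ inside the gradient formula and treat shift and scale together, whereas the paper notes that $\tilde h$ is itself $2$-homogeneous and applies the Lemma to $\tilde h$. Your nonnegativity caveat is a valid remark about the construction $h_\theta=\Psi_\theta^2$ (e.g.\ $\Psi(\x)=|x_1|-2x_2$ is convex and $1$-homogeneous but its square is not convex), though it is not needed here, since the lemma's hypothesis already takes $h$ to be convex and $2$-homogeneous.
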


\paragraph{Practical construction of the AE-LPN.}
The architecture of the AE-LPN is similar to that of the original LPN developed by \citet{fang2024whatspriorlearnedproximal}. An Input Convex Neural Network (ICNN) is constructed by constraining its weights, or parameters, to be non-negative and employing non-decreasing activation functions, with SortPool chosen for scale equivariance \citep{NIPS2000_44968aec}. Different from the LPN in \citet{fang2024whatspriorlearnedproximal}, additive bias terms are removed from all layers and residual connections and the scalar output is squared, ensuring a degree-two homogeneous function. This ICNN defines the function $h$ applied to the input minus its mean, i.e. $(I-P)\x$. To
achieve full affine equivariance of the gradient, this function is combined with the quadratic term $\tfrac{1}{2}\|P\x\|_2^2$, where $P\x$ is a constant vector with each entry equal to the mean of the input. The resulting scalar function is precisely the one computed in \eqref{eq:AE-LPN}.
The AE-LPN proximal equivariant operator is then computed as the gradient of this function, such that $\hat{\x} = \nabla \tilde h_\theta(\x)$.

As with the original LPN framework, we train our AE-LPN with proximal matching loss, ensuring that $\nabla \tilde h_\theta(\x)$ approximates the MAP denoiser for signals $\x\sim p_\x$.
To this end, a clean sample $\mathbf{x} \sim p_\x$ is drawn from the training dataset and corrupted with Gaussian noise $\z \sim \mathcal{N}(0,\sigma^2I)$ at a noise-level $\sigma$ to yield $\mathbf{y} = \mathbf{x} + \z.$ The AE-LPN produces a denoised output via its gradient, as $\hat{\mathbf{x}} = \nabla \tilde h_\theta(\mathbf{y})$,
and a loss is computed to optimize the network's parameters. As in \citet{fang2024whatspriorlearnedproximal}, training begins with a brief pre-training phase using the $\ell_1$ loss. Afterward, optimization continues
using the proximal matching loss as presented in Section~\ref{sec:background}, where $\gamma > 0$ is gradually decreased with proceeding training epochs.

\section{Experimental Results}
\label{sec:results}

We study three settings: recovery of an affine-equivariant proximal, denoising across noise levels, and denoising under affine transformations, highlighting the equivariant properties in the AE-LPN.

\begin{figure}[t]
  \centering
  \includegraphics[width=0.95\textwidth]{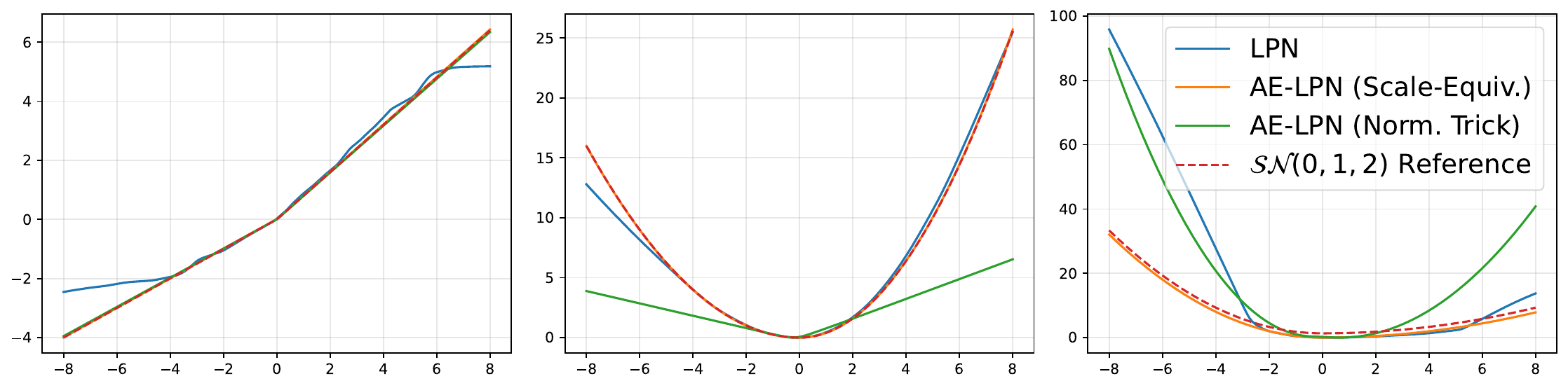}
  \caption{Learning a scale-equivariant proximal operator from samples via LPNs \citep{fang2024whatspriorlearnedproximal}. \textsc{Left}: learned operators, \textsc{middle}: convex potential functions, \textsc{right}: implicit regularizer.}
  \label{fig:split normal res}
\end{figure}

\paragraph{Learning proximal operators of a split normal.}
We first study a split normal distribution in one dimension, $\mathcal{SN}(\mu,\sigma_1,\sigma_2)$, defined as \citep{1973ApPhL..22..568G}
\[
p(x) = \dfrac{\sqrt{2}}{\sqrt{\pi}(\sigma_1+\sigma_2)}
\exp\!\left( -\dfrac{(x-\mu)^2}{2\sigma_i^2} \right),
\]
where $\sigma_i=\sigma_1$ when $x<\mu$, and $\sigma_i=\sigma_2$ when $x\geq\mu$.
The proximal operator of the negative log-prior is
\[
\operatorname{prox}_{-\lambda\log p}(x) =
\frac{\lambda \mu + \sigma_1^2 x}{\lambda + \sigma_1^2}\,\mathbf{1}_{\{x<\mu\}} +
\frac{\lambda \mu + \sigma_2^2 x}{\lambda + \sigma_2^2}\,\mathbf{1}_{\{x \ge \mu\}},
\]
which is scale-equivariant when $\mu=0$. Note that the only one-dimensional function that is affine-equivariant in 1D is $f(x)=x$, which may be trivial to learn as a proximal operator. We then train an original LPN, an LPN with the normalization-trick described in Section~\ref{sec:intro}, our proposed AE-LPN, and, as an ablation, LPNs that are equivariant only to scale or shift, drawing i.i.d samples from $\mathcal{SN}(0,1,2)$ and additive Gaussian noise with $\sigma=1.0$. In proximal matching we use $\gamma=0.1$. The weights are optimized with Adam \citep{kingma2017adammethodstochasticoptimization} for 10k steps at $lr=10^{-3}$ and 10k with $lr=10^{-4}$.

As seen in Figure~\ref{fig:split normal res}, only the AE-LPN can learn the true proximal operator and log-prior, providing a prox that is scale-equivariant. While the normalization trick provides a scale-equivariant denoiser, its mapping is no longer ensured to be an exact proximal operator (as it is not the gradient of a convex potential). Such a guarantee is essential in many settings, e.g., convergence of Plug-n-Play iterations in solving inverse problems \citep{6737048}.
In contrast, AE-LPNs achieve equivariance while retaining the guarantee that the learned mapping is an exact proximal operator.

\begin{figure}[t]
  \centering
  \includegraphics[trim=0 55 0 0, width=0.75\textwidth]{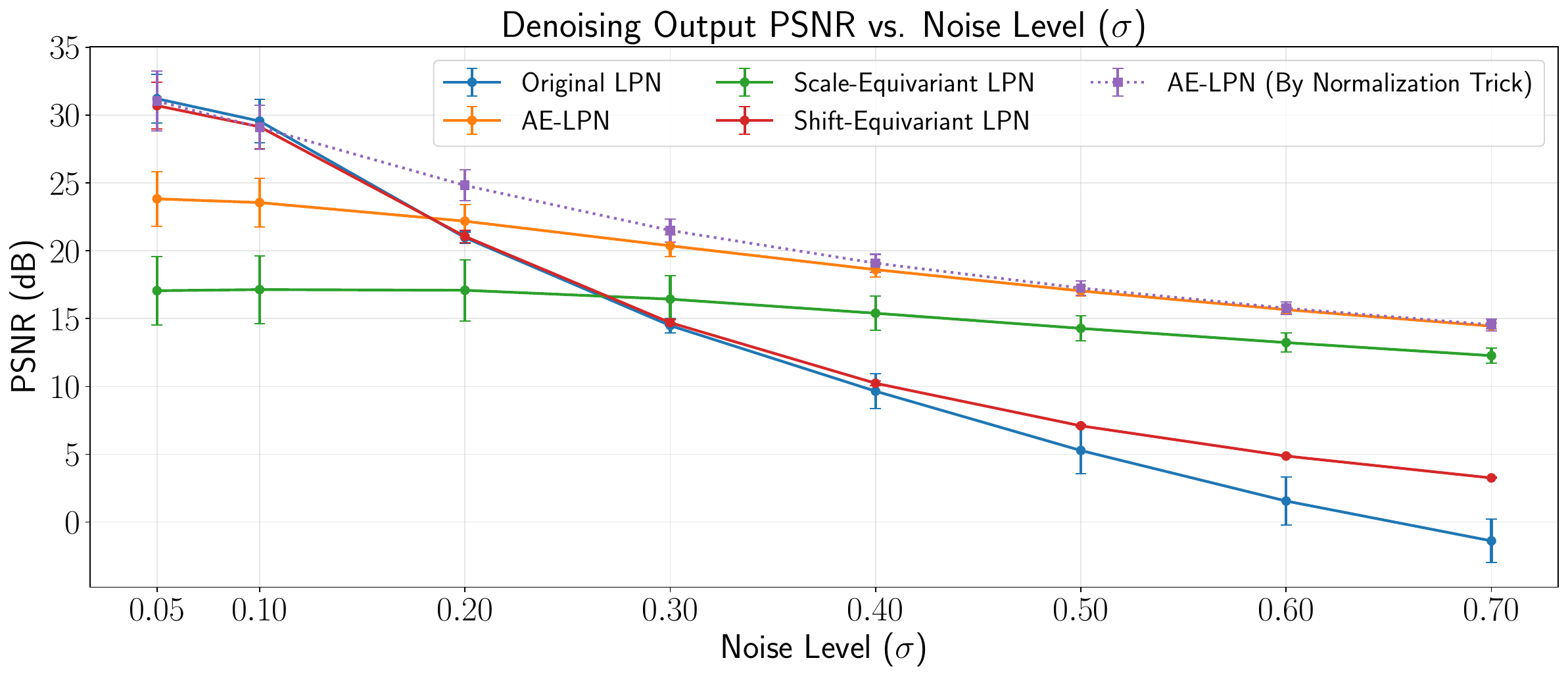}
  \caption{PSNR of different methods (see description in text) for denoising across varying noise levels on the BSDS500 dataset. All models are trained with noise level of 0.1.}
  \label{fig:denoising test plot}
\end{figure}

\paragraph{Image denoising across noise-levels.}
In image denoising, affine-equivariance can help alleviate sensitivity of denoisers to distribution shifts \citep{herbreteau2024normalizationequivariantneuralnetworksapplication}. We use the BSDS500 dataset \citep{amfm_pami2011}, containing a variety of general natural images. For consistency, we cropped the images to a resolution of 128$\times$128 and implemented architectures at this size. All models were trained using Gaussian noise level of $0.1$, pre-training using $\ell_1$ loss for 20k steps ($lr = 10^{-5}$ for equivariant models and $lr=10^{-3}$ for LPN) followed by 20k steps with proximal matching ($lr = 10^{-5}$ for equivariant models and $lr=10^{-4}$ for LPN) with $\gamma$ starting at $\gamma=0.64\sqrt{128^2\times3} \approx 142$ and halved every 5k steps.

\cref{fig:denoising test plot} shows denoising PSNR across input noise levels for five models. Naturally, AE-LPNs' performance is lower than that of more flexible LPNs when tested at the same noise level seen during training (0.1). However, equivariant models are significantly more robust to changes in the noise level at testing time. In this denoising setting, scale equivariance seems to be more important than shift equivariance, as evidenced by the green and red curves. Lastly, while we include the equivariant model via the normalization trick, such a model does not provide a proximal operator.

\begin{figure}[t]
  \centering
  \includegraphics[trim=0 80 0 0, width=0.75\textwidth]{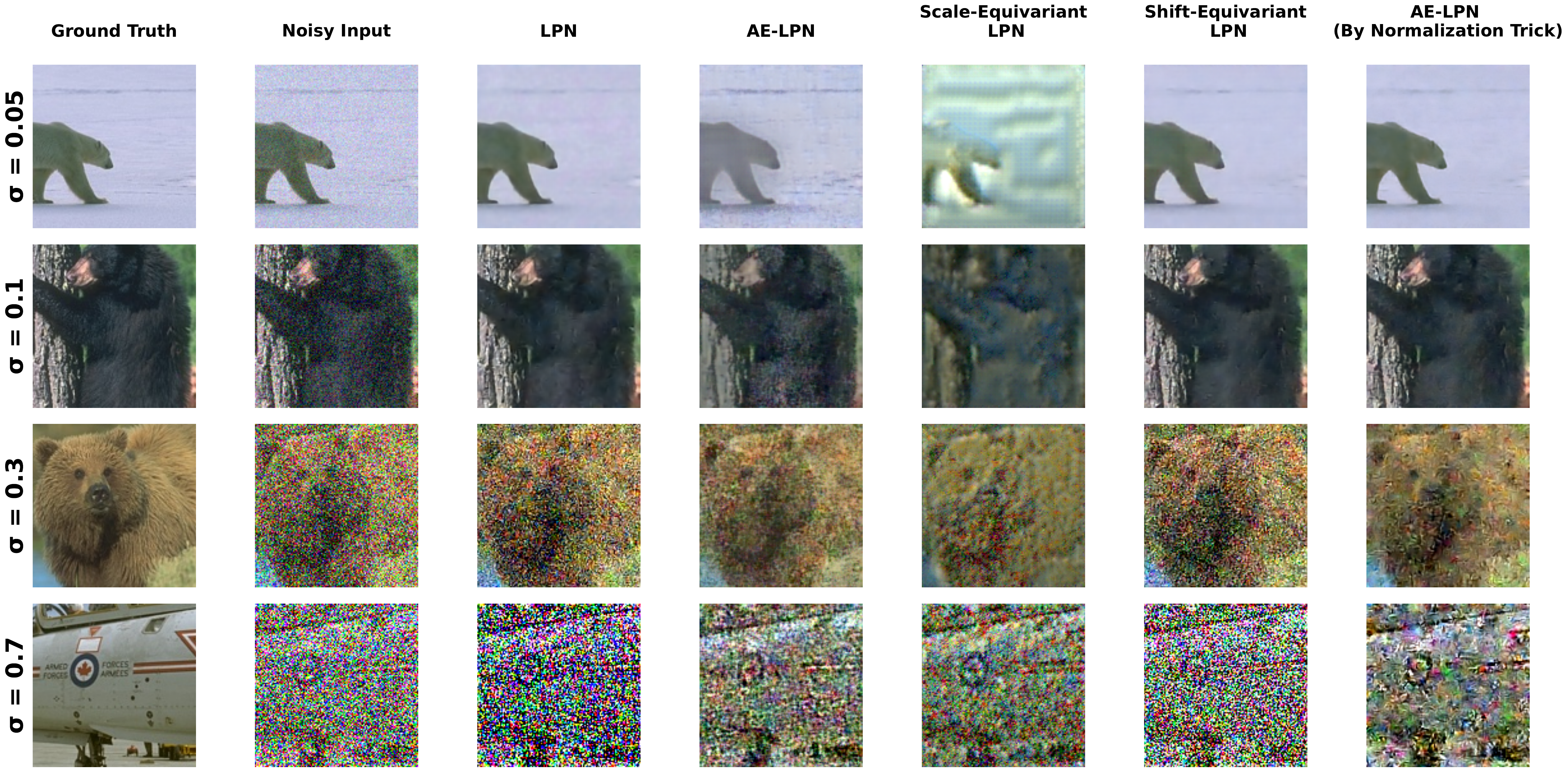}
  \caption{Example results on BSDS500 across noise levels.}
  \label{fig:denoising visual chart}
\end{figure}

\paragraph{Evaluating affine transformations.}
Finally, we show that AE-LPNs are the only model that fully maintains the affine structure of an image while learning a true proximal operator. We apply affine transformations $g$ on images and compute the PSNR between $f_\theta(g(\x))$ and $g(f_\theta(\x))$. Figure~\ref{fig:affine_denoising_curves} shows these results for the transformation $g(\x) = \alpha \x + (1-\alpha)$ for values of $\alpha \in (0,1]$, simulating changes in image brightness. Results show that AE-LPNs can both learn a true proximal operator and account for the scale and shift of the input, as evidenced by the very high PSNR (i.e., negligible error) between $f_\theta(g(\x))$ and $g(f_\theta(\x))$.

\begin{figure}[t]
  \centering
  \includegraphics[trim=0 30 0 0, width=0.5\textwidth]{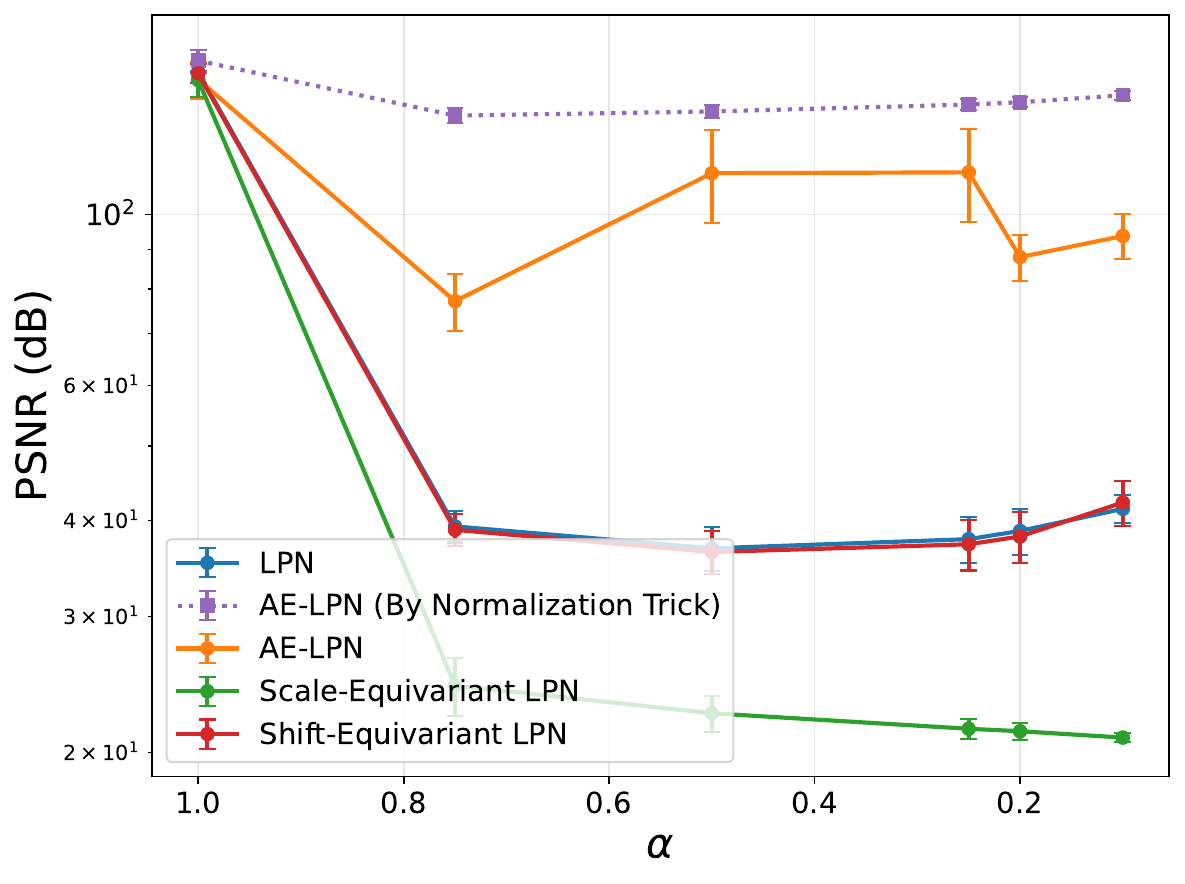}
  \caption{Results validating equivariance on BSDS500 under the transformation $g(\x) = \alpha\x + (1-\alpha)$.}
  \label{fig:affine_denoising_curves}
\end{figure}

\section{Conclusion}
\label{sec:conclusion}

This work presented a framework for constructing exact proximal operators that are equivariant to affine transformations, dubbed Affine-Equivariant Learned Proximal Networks (AE-LPNs). These models provide data-driven equivariant proximal operators that are beneficial when the underlying data exhibits equivariant structures, and improve robustness to noise levels beyond the training distribution for image denoising. The benefits of AE-LPNs could extend beyond denoising to improved sample complexity for training and applications like CT reconstruction, where variations in windowing and intensity scaling are ubiquitous, which remain future work.

\section*{Acknowledgments}
JS and ZF acknowledge funding from NIH Grant P41EB031771. JS is partially supported by NSF DMS 2502377.

\bibliographystyle{plainnat}
\bibliography{strings}

@misc{fang2024whatspriorlearnedproximal,
      title={What's in a Prior? Learned Proximal Networks for Inverse Problems}, 
      author={Zhenghan Fang and Sam Buchanan and Jeremias Sulam},
      year={2024},
      eprint={2310.14344},
      archivePrefix={arXiv},
      primaryClass={cs.CV},
      url={https://arxiv.org/abs/2310.14344}, 
}

@misc{amos2017inputconvexneuralnetworks,
      title={Input Convex Neural Networks}, 
      author={Brandon Amos and Lei Xu and J. Zico Kolter},
      year={2017},
      eprint={1609.07152},
      archivePrefix={arXiv},
      primaryClass={cs.LG},
      url={https://arxiv.org/abs/1609.07152}, 
}

@misc{gribonval2020characterizationproximityoperators,
      title={A characterization of proximity operators}, 
      author={Rémi Gribonval and Mila Nikolova},
      year={2020},
      eprint={1807.04014},
      archivePrefix={arXiv},
      primaryClass={math.CA},
      url={https://arxiv.org/abs/1807.04014}, 
}

@article{bietti2019group,
  title={Group invariance, stability to deformations, and complexity of deep convolutional representations},
  author={Bietti, Alberto and Mairal, Julien},
  journal={Journal of Machine Learning Research},
  volume={20},
  number={25},
  pages={1--49},
  year={2019}
}

@article{anselmi2016unsupervised,
  title={Unsupervised learning of invariant representations},
  author={Anselmi, Fabio and Leibo, Joel Z and Rosasco, Lorenzo and Mutch, Jim and Tacchetti, Andrea and Poggio, Tomaso},
  journal={Theoretical Computer Science},
  volume={633},
  pages={112--121},
  year={2016},
  publisher={Elsevier}
}

@inproceedings{Meinhardt_2017,
   title={Learning Proximal Operators: Using Denoising Networks for Regularizing Inverse Imaging Problems},
   url={http://dx.doi.org/10.1109/ICCV.2017.198},
   DOI={10.1109/iccv.2017.198},
   booktitle={2017 IEEE International Conference on Computer Vision (ICCV)},
   publisher={IEEE},
   author={Meinhardt, Tim and Moeller, Michael and Hazirbas, Caner and Cremers, Daniel},
   year={2017},
   month=oct, pages={1799–1808} }

@misc{mohan2020robustinterpretableblindimage,
      title={Robust and interpretable blind image denoising via bias-free convolutional neural networks}, 
      author={Sreyas Mohan and Zahra Kadkhodaie and Eero P. Simoncelli and Carlos Fernandez-Granda},
      year={2020},
      eprint={1906.05478},
      archivePrefix={arXiv},
      primaryClass={eess.IV},
      url={https://arxiv.org/abs/1906.05478}, 
}

@inproceedings{NIPS2000_44968aec,
 author = {Dugas, Charles and Bengio, Yoshua and B\'{e}lisle, Fran\c{c}ois and Nadeau, Claude and Garcia, Ren\'{e}},
 booktitle = {Advances in Neural Information Processing Systems},
 editor = {T. Leen and T. Dietterich and V. Tresp},
 pages = {},
 publisher = {MIT Press},
 title = {Incorporating Second-Order Functional Knowledge for Better Option Pricing},
 url = {https://proceedings.neurips.cc/paper_files/paper/2000/file/44968aece94f667e4095002d140b5896-Paper.pdf},
 volume = {13},
 year = {2000}
}

@ARTICLE{1973ApPhL..22..568G,
       author = {{Gibbons}, J.~F. and {Mylroie}, S.},
        title = "{Estimation of impurity profiles in ion-implanted amorphous targets using joined half-Gaussian distributions}",
      journal = {Applied Physics Letters},
         year = 1973,
        month = jun,
       volume = {22},
       number = {11},
        pages = {568-569},
          doi = {10.1063/1.1654511},
       adsurl = {https://ui.adsabs.harvard.edu/abs/1973ApPhL..22..568G}
}

@misc{kingma2017adammethodstochasticoptimization,
      title={Adam: A Method for Stochastic Optimization}, 
      author={Diederik P. Kingma and Jimmy Ba},
      year={2017},
      eprint={1412.6980},
      archivePrefix={arXiv},
      primaryClass={cs.LG},
      url={https://arxiv.org/abs/1412.6980}, 
}

@misc{evangelista2023ambiguitysolvingimaginginverse,
      title={Ambiguity in solving imaging inverse problems with deep learning based operators}, 
      author={Davide Evangelista and Elena Morotti and Elena Loli Piccolomini and James Nagy},
      year={2023},
      eprint={2305.19774},
      archivePrefix={arXiv},
      primaryClass={cs.CV},
      url={https://arxiv.org/abs/2305.19774}, 
}

@misc{milanfar2024denoisingpowerfulbuildingblockimaging,
      title={Denoising: A Powerful Building-Block for Imaging, Inverse Problems, and Machine Learning}, 
      author={Peyman Milanfar and Mauricio Delbracio},
      year={2024},
      eprint={2409.06219},
      archivePrefix={arXiv},
      primaryClass={cs.LG},
      url={https://arxiv.org/abs/2409.06219}, 
}

@misc{candes2012mathematicaltheorysuperresolution,
      title={Towards a Mathematical Theory of Super-Resolution}, 
      author={Emmanuel Candes and Carlos Fernandez-Granda},
      year={2012},
      eprint={1203.5871},
      archivePrefix={arXiv},
      primaryClass={cs.IT},
      url={https://arxiv.org/abs/1203.5871}, 
}

@ARTICLE{8015117,
  author={Oliveri, Giacomo and Salucci, Marco and Anselmi, Nicola and Massa, Andrea},
  journal={IEEE Antennas and Propagation Magazine}, 
  title={Compressive Sensing as Applied to Inverse Problems for Imaging: Theory, Applications, Current Trends, and Open Challenges}, 
  year={2017},
  volume={59},
  number={5},
  pages={34-46},
  doi={10.1109/MAP.2017.2731204}}

@misc{benning2018modernregularizationmethodsinverse,
      title={Modern Regularization Methods for Inverse Problems}, 
      author={Martin Benning and Martin Burger},
      year={2018},
      eprint={1801.09922},
      archivePrefix={arXiv},
      primaryClass={math.NA},
      url={https://arxiv.org/abs/1801.09922}, 
}

@article{doi:10.1137/1021044,
author = {Willoughby, Ralph A.},
title = {Solutions of Ill-Posed Problems (A. N. Tikhonov and V. Y. Arsenin)},
journal = {SIAM Review},
volume = {21},
number = {2},
pages = {266-267},
year = {1979},
doi = {10.1137/1021044},
URL = { 
        https://doi.org/10.1137/1021044
},
eprint = {   
        https://doi.org/10.1137/1021044    
}
}

@inproceedings{NIPS2008_c16a5320,
 author = {Jain, Viren and Seung, Sebastian},
 booktitle = {Advances in Neural Information Processing Systems},
 editor = {D. Koller and D. Schuurmans and Y. Bengio and L. Bottou},
 pages = {},
 publisher = {Curran Associates, Inc.},
 title = {Natural Image Denoising with Convolutional Networks},
 url = {https://proceedings.neurips.cc/paper_files/paper/2008/file/c16a5320fa475530d9583c34fd356ef5-Paper.pdf},
 volume = {21},
 year = {2008}
}

@article{10.1561/2400000003,
author = {Parikh, Neal and Boyd, Stephen},
title = {Proximal Algorithms},
year = {2014},
issue_date = {Jan 2014},
publisher = {Now Publishers Inc.},
address = {Hanover, MA, USA},
volume = {1},
number = {3},
issn = {2167-3888},
url = {https://doi.org/10.1561/2400000003},
doi = {10.1561/2400000003},
journal = {Found. Trends Optim.},
month = jan,
pages = {127–239},
numpages = {116}
}

@INPROCEEDINGS{6737048,
  author={Venkatakrishnan, Singanallur V. and Bouman, Charles A. and Wohlberg, Brendt},
  booktitle={2013 IEEE Global Conference on Signal and Information Processing}, 
  title={Plug-and-Play priors for model based reconstruction}, 
  year={2013},
  volume={},
  number={},
  pages={945-948},
  doi={10.1109/GlobalSIP.2013.6737048}}

@misc{romano2017littleenginecouldregularization,
      title={The Little Engine that Could: Regularization by Denoising (RED)}, 
      author={Yaniv Romano and Michael Elad and Peyman Milanfar},
      year={2017},
      eprint={1611.02862},
      archivePrefix={arXiv},
      primaryClass={cs.CV},
      url={https://arxiv.org/abs/1611.02862}, 
}

@misc{herbreteau2024normalizationequivariantneuralnetworksapplication,
      title={Normalization-Equivariant Neural Networks with Application to Image Denoising}, 
      author={Sébastien Herbreteau and Emmanuel Moebel and Charles Kervrann},
      year={2024},
      eprint={2306.05037},
      archivePrefix={arXiv},
      primaryClass={cs.CV},
      url={https://arxiv.org/abs/2306.05037}, 
}

@Article{amfm_pami2011,
 author = {Arbelaez, Pablo and Maire, Michael and Fowlkes, Charless and Malik, Jitendra},
 title = {Contour Detection and Hierarchical Image Segmentation},
 journal = {IEEE Trans. Pattern Anal. Mach. Intell.},
 issue_date = {May 2011},
 volume = {33},
 number = {5},
 month = may,
 year = {2011},
 issn = {0162-8828},
 pages = {898--916},
 numpages = {19},
 url = {http://dx.doi.org/10.1109/TPAMI.2010.161},
 doi = {10.1109/TPAMI.2010.161},
 acmid = {1963088},
 publisher = {IEEE Computer Society},
 address = {Washington, DC, USA},
}

@article{bruckstein2009sparse,
  title={From sparse solutions of systems of equations to sparse modeling of signals and images},
  author={Bruckstein, Alfred M and Donoho, David L and Elad, Michael},
  journal={SIAM review},
  volume={51},
  number={1},
  pages={34--81},
  year={2009},
  publisher={SIAM}
}

\newpage
\end{document}